\newtheorem{assumption}{Assumption}
\newtheorem{definition}{Definition}
\newtheorem{lemma}{Lemma}
\newtheorem{theorem}{Theorem}
\newcommand{\Sc}{\mathcal{S}}
\newcommand{\Ac}{\mathcal{A}}
\newcommand{\Tc}{\mathcal{T}}
\newcommand{\Tt}{T_{\max}}
\newcommand{\Dc}{\mathcal{D}}
\newcommand{\Hc}{\mathcal{H}}
\newcommand{\rc}{{\it r}}
\DeclareMathOperator*{\argmax}{arg\,max}
\title{Safe Reinforcement Learning with \\ Contrastive Risk Prediction}
\author{
  Hanping Zhang, \quad  Yuhong Guo\\
  Carleton University, Ottawa, Canada\\ 
}
\begin{document}
\maketitle

\begin{abstract}
As safety violations can lead to severe consequences in real-world robotic applications, 
the increasing deployment of Reinforcement Learning (RL) in robotic domains
has propelled the study of safe exploration for reinforcement learning (safe RL).
In this work, we propose a risk preventive training method for safe RL,
which learns a statistical contrastive classifier to predict the probability 
of a state-action pair leading to unsafe states. 
Based on the predicted risk probabilities, we can collect risk preventive trajectories
and reshape the reward function with risk penalties to induce safe RL policies. 
We conduct experiments in robotic simulation environments. 
The results show the proposed approach has comparable performance 
with the state-of-the-art model-based methods and outperforms conventional model-free safe RL approaches.
\end{abstract}


\section{Introduction}
Reinforcement Learning (RL) offers a great set of technical tools for 
robotics by 
enabling robots to automatically learn behavior policies through 
interactions with the environments
and solving the decision-making problems in robotic environments
\citep{kober2013reinforcement}. 
The deployment of RL in robotic domains can range from 
small daily used robots~\citep{mahadevan1992automatic, gullapalli1994acquiring}
to simulation robots with specific real-world applications~\citep{bagnell2001autonomous} 
and humanoid robots~\citep{schaal1996learning}. 
Conversely, the applications in real-world robotic domains also pose important new challenges for RL research. 
In particular, many real-world robotic environments and tasks, such as 
human-related robotic environments~\citep{brunke2021safe}, 
helicopter manipulation~\citep{martin2009learning, koppejan2011neuroevolutionary}, 
autonomous vehicle~\citep{wen2020safe}, and aerial delivery~\citep{faust2017automated},
have very low tolerance for violations of safety constraints, as such violation can cause severe consequences.
This raises a substantial demand for safe reinforcement learning techniques.

Safe exploration for RL (safe RL) investigates RL methodologies with critical safety considerations,
and has received increased attention from the RL research community. 
In safe RL, in addition to the reward function~\citep{sutton2018reinforcement}, 
a RL agent often deploys a cost function to maximize the discounted cumulative reward 
while satisfying the cost constraint~\citep{mihatsch2002risk, hans2008safe, ma2022conservative}. 
A comprehensive survey of safe RL categorizes the safe RL techniques into two classes: 
modification of the optimality criterion and modification of the exploration process~\citep{garcia2015comprehensive}. 
For modification of the optimality criterion, previous works mostly focus on the modification of the reward. 
Many works~\citep{ray2019benchmarking,shen2022penalized, tessler2018reward, hu2020learning, thomas2021safe,zhang2020cautious} 
pursue such modifications by shaping the reward function with penalizations induced from different 
forms of cost constraints. 
For modification of the exploration process, safe RL approaches focus on training RL agents on modified trajectory data. 
For example, some works deploy backup policies to recover from safety violations 
to safer trajectory data that satisfy the safety constraint~\citep{thananjeyan2021recovery,bastani2021safe, achiam2017constrained}.

In this paper, we propose a novel risk preventive training (RPT) method to 
tackle the safe RL problem. 
The key idea is to learn a contrastive classification model to predict the risk---the probability of 
a state-action pair leading to unsafe states,
which can then be deployed to modify both the exploration process and the optimality criterion. 
In terms of exploration process modification, we 
collect trajectory data in a risk preventive manner based on the predicted probability of risk. 
A trajectory is terminated if the next state falls into an unsafe region that has above-threshold risk values. 
Regarding optimality criterion modification, we reshape the reward function by penalizing it 
with the predicted risk for each state-action pair. 
Benefiting from the generalizability of risk prediction, 
the proposed approach can avoid safety constraint violations 
much early in the training phase and induce safe RL policies.
We conduct experiments using 
four robotic simulation environments on 
MuJoCo~\citep{todorov2012mujoco}. 
Our model-free approach produces comparable performance with a state-of-the-art model-based safe RL method
SMBPO~\citep{thomas2021safe} and greatly outperforms other model-free safe RL methods.
The main contributions of the proposed work can be summarized as follows:
\begin{itemize}
    \item 
This is the first work that simultaneously learns a contrastive classifier to perform risk prediction
while conducting safe RL exploration. 
\item
With risk prediction probabilities, the proposed approach is able to 
perform both exploration process modification through risk preventive trajectory collection
and optimality criterion modification through reward reshaping. 
\item
As a model-free method, the proposed approach achieves comparable performance to 
the state-of-the-art model-based safe RL method and outperforms other model-free methods
in robotic simulation environments. 
\end{itemize}

\section{Related Works}

\paragraph{Safe Reinforcement Learning for Robotics.}
Reinforcement Learning (RL) offers a set of great tools for robotics and 
is becoming an important part of robotic learning. 
RL studies an agent's decision-making at higher abstractive level, and enables a robot to learn an optimal behavior policy by interacting with the environment~\citep{kober2013reinforcement}. 
With the increasing application demands in robotic environments, 
new challenges are raised for reinforcement learning. 
For example, some robotic environments (in particular the
human-related environments) 
have very low tolerance for violations of safety constraints, 
where safety is one most important concern~\citep{brunke2021safe}. 
Martín H et al.~\citep{martin2009learning} and Koppejan et al.~\citep{koppejan2011neuroevolutionary} studied the RL application on helicopters based on strict safety assumptions. 
Wen et al. developed a safe RL approach called Parallel Constrained Policy Optimization (PCPO) to specifically enhance the safety of autonomous vehicles~\citep{wen2020safe}. 
Faust et al. developed a RL-based unmanned aerial vehicles (UAVs) system for aerial delivery tasks to avoid static obstacles~\citep{faust2017automated}. 
Kahn et al.~\citep{kahn2017uncertainty} propose a model-based collision avoidance approach that is applicable to robotics like a quadrotor and a RC car. 
Todorov et al.~\citep{todorov2012mujoco} developed a robotic simulation environment named MuJoCo, 
which promotes the study of the RL applications in robotic environments. 
Thomas et al. further modified the MuJoCo environment to define safety violations for robotic simulations~\citep{thomas2021safe}.

\paragraph{Safe Reinforcement Learning Methods.}
Many methods have been developed in the literature for safe RL. Altman et al. introduced the Constrained Markov Decision Process (CMDP)~\citep{altman1999constrained} to formally define the problem of safe exploration in reinforcement learning. Mihatsch et al. introduced the definition of risk into safe RL and intended to find a risk-avoiding policy based on risk-sensitive controls~\citep{mihatsch2002risk}. 
Hans et al. further differentiated the states as ``safe" and ``unsafe" states based on human-designed criteria, 
while a RL agent is considered to be not safe if it reaches "unsafe" states~\citep{hans2008safe}. 
Garc{\i} et al.~\citep{garcia2015comprehensive} presented a comprehensive survey on previous works on safe reinforcement learning, which categorizes the conventional safe RL methods into two classes: 
modification of the optimality criterion and modification of the exploration process. 
Ray et al.~\citep{ray2019benchmarking} %
presented
a benchmark to measure the performance of RL agents on safety concerned environments. 
More recently, Bastani et al~\citep{thananjeyan2021recovery} and Thananjeyan et al.~\citep{bastani2021safe} 
focused on using backup policies of the safe regions, aiming to avoid safety violations.  
Tessler et al. applied the reward shaping technique in safe RL to penalize the normal training policy, 
which is known as Reward Constrained Policy Optimization (RCPO)~\citep{tessler2018reward}. 
Ma et al. propose a model-based Conservative and Adaptive Penalty (CAP) approach to explore safely 
by modifying the penalty in the training process~\citep{ma2022conservative}.
Zhang et al. develop a reward shaping approach built upon 
Probabilistic Ensembles with Trajectory Sampling (PETS)
~\citep{zhang2020cautious}.
It pretrains a predictor of the unsafe state in an offline sandbox environment and penalizes the reward of PETS in the adaptation with online environments.
A similar work by Thomas et al~\citep{thomas2021safe} reshapes reward functions using a model-based predictor. 
It regards unsafe states as absorbing states and trains the RL agent with a penalized reward 
to avoid the visited unsafe states.

\section{Preliminary}
Reinforcement learning (RL) has been broadly used to train robotic agents 
by maximizing the discounted cumulative rewards. 
The representation of a reinforcement learning problem can be formulated as a Markov Decision Process (MDP)  
$M=(\Sc, \Ac, \Tc, r, \gamma)$ 
~\citep{sutton2018reinforcement}, 
where $\Sc$ is the state space for all observations, 
$\Ac$ is the action space for available actions, 
$\Tc: \Sc\times \Ac\to \Sc$ is the transition dynamics, 
$r: \Sc\times \Ac\to [r_{min}, r_{max}]$ is the reward function, 
and $\gamma\in (0, 1)$ is the discount factor. 
An agent can start from a random initial state $s_0$ to take actions and interact with the MDP environment
by receiving rewards for each action and moving to new states. 
Such interactions can produce a transition $(s_t, a_t, r_t, s_{t+1})$ at each time-step $t$
with 
$s_{t+1}=\Tc(s_t,a_t)$ and  
$r_t=\rc(s_t, a_t)$,
while a sequence of transitions comprise a trajectory 
$\tau=(s_0, a_0, r_0, s_1, a_1, r_1, \cdots, s_{|\tau|+1})$. 
The goal of RL is to learn an optimal policy $\pi^*: \Sc \to \Ac$ 
that can maximize the expected discounted cumulative reward (return): 
$\pi^*=\argmax_\pi\; J_r(\pi)=\mathbb{E}_{\tau\sim\Dc_\pi}[\sum_{t=0}^{|\tau|} \gamma^t r_t]$

\subsection{Safe Exploration for Reinforcement Learning}
Safe exploration for Reinforcement Learning (safe RL)
studies RL with critical safety considerations.
For a safe RL environment, in addition to the reward function, 
a cost function can also exist to reflect the risky status of each exploration step. 
The process of safe RL can be formulated as a Constrained Markov Decision Process (CMDP)~\citep{altman1999constrained},
$\hat{M}=(\Sc, \Ac, \Tc, r, \gamma, c, d)$,
which introduces an extra cost function $c$ and a cost threshold $d$ into MDP. 
An exploration trajectory under CMDP can be written as 
$\tau=(s_0, a_0, r_0, c_0, s_1, a_1, r_1, c_1, \cdots, s_{|\tau|+1})$,
where the transition at time-step $t$ is $(s_t, a_t, s_{t+1}, r_t, c_t)$, with a cost value $c_t$ 
induced from the cost function $c_t = c(s_t, a_t)$.
CMDP monitors the safe exploration process by requiring the cumulative cost $J_c(\pi)$ 
does not exceed the cost threshold $d$,
where $J_c(\pi)$ can be defined as the expected total cost of the exploration, 
$J_c(\pi)=\mathbb{E}_{\tau\sim \Dc_\pi}[\sum_{t=0}^{|\tau|} c_t]$ ~\citep{ray2019benchmarking}. 
Safe RL hence aims to learn 
an optimal policy $\pi^*$
that can maximize the expected discounted cumulative reward 
subjecting to a cost constraint, as follows: 
\begin{align}
\label{equation:cmdp}
	\pi^*=\argmax_\pi J_r(\pi)=\mathbb{E}_{\tau\sim \Dc_\pi}\left[\sum\nolimits_{t=0}^{|\tau|} \gamma^t r_t\right],\quad 
	\text{s.t. }\, J_c(\pi)=\mathbb{E}_{\tau\sim \Dc_\pi}\left[\sum\nolimits_{t=0}^{|\tau|} c_t\right]
	\leq d
\end{align}

\section{Method}
Robot operations typically have low tolerance for risky/unsafe states and actions,
since a robot could be severely damaged in real-world environments 
when the safety constraint being violated. 
Similar to~\citep{hans2008safe}, 
we adopt a strict setting in this work for the safety constraint
such that any ``unsafe" state can cause violation of the safety constraint
and the RL agent will terminate an exploration trajectory when encountering an ``unsafe" state.
In particular, we have the following definition: 
\begin{definition}
\label{definition:unsafe}
For a state $s$ and an action $a$, the value of the cost function $c(s,a)$ 
can either be $0$ or $1$. 
When $c(s,a)=0$, the induced state $\Tc(s,a)$ is defined as a safe state;
when $c(s,a)=1$, the induced state $\Tc(s,a)$ is defined as an unsafe state,
which triggers the violation of safety constraint and hence causes the termination of the trajectory. 
\end{definition}

Based on this definition, the cost threshold $d$ in Eq.~(\ref{equation:cmdp}) should be set strictly to $0$.
The agent is expected to learn a safe policy $\pi$ 
that can operate with successful trajectories containing only safe states. 
Towards this goal, we propose a novel risk prediction method for safe RL.
The proposed method deploys a contrastive classifier to predict the probability 
of a state-action pair leading to unsafe states, which can be trained 
during the exploration process of RL and generalized to previously unseen states.  
With risk prediction probabilities, a more informative cumulative cost $J_c(\pi)$ 
can be formed to prevent unsafe trajectories and 
reshape the reward in each transition of a trajectory 
to induce safe RL policies. 
Previous safe RL methods in the literature can typically be categories 
into two classes: modification of the optimality criterion and modification of the exploration process 
\citep{garcia2015comprehensive}. 
With safety constraints and risk predictions, the proposed approach (to be elaborated below) 
is expected to incorporate the strengths of both categories of safe RL techniques.

\subsection{Risk Prediction with Contrastive Classification }
\label{section:risk}
Although a RL agent 
would inevitably encounter unsafe states during the initial stage of
the exploration process in an unknown environment,
we aim to quickly learn from the unsafe experience through statistical learning
and generalize the recognition of unsafe trajectories to 
prevent risk for future exploration. 
Specifically, we aim to compute the probability of a state-action pair leading 
to unsafe states, i.e., $p(y=1|s_t,a_t)$, where $y\in\{0,1\}$ denotes 
a random variable 
that indicates 
whether $(s_t, a_t)$ leads to an unsafe state $s_u\in S_U$.
The set of unsafe states, $S_U$, can be either pre-given or collected during initial exploration. 
However, directly training a binary classifier to make such predictions
is impractical as it is difficult to judge whether a state-action pair 
is {\em safe}---i.e., never leading to unsafe states. 
For this purpose, we propose to 
train a contrastive classifier $F_\theta(s_t,a_t)$ with model parameter $\theta$ 
to discriminate a positive state-action pair $(s_t,a_t)$ in a trajectory that leads to unsafe states (unsafe trajectory) 
and a random state-action pair from the overall distribution of any trajectory.
Such a contrastive form of learning can conveniently avoid the identification problem
of absolute negative (safe) state-action pairs.

Let $p(s_t,a_t|y=1)$ denote 
the presence probability of a state-action pair $(s_t,a_t)$ in a trajectory that leads to unsafe states,
and $p(y=1)$ denote the distribution probability of unsafe trajectory in the environment. 
The contrastive classifier $F_\theta(s_t,a_t)$ can be expressed as:
\begin{equation}
\label{equation:risk1}
F_\theta(s_t,a_t)=\frac{p(s_t,a_t|y=1)p(y=1)}{p(s_t,a_t|y=1)p(y=1)+p(s_t,a_t)}
\end{equation}
which contrastively identifies the state-action pairs in unsafe trajectories 
from pairs in the overall distribution.

From the definition of $F_\theta(s_t,a_t)$ in Eq.(\ref{equation:risk1}),
one can easily derive the probability of interest, $p(y=1|s_t,a_t)$, 
by Bayes' theorem, as follows:
\begin{equation}
\begin{aligned}
\label{equation:risk2}
 p(y=1|s_t,a_t)&= \frac{p(s_t,a_t|y=1)p(y=1)}{p(s_t,a_t)} 
	= \frac{F_\theta(s_t,a_t)}{1-F_\theta(s_t,a_t)}	
\end{aligned}
\end{equation}
Although the normal output range for the probabilistic classifier 
$F_\theta(s_t,a_t)$ should be $[0,1]$,
this could lead to unbounded $p(y=1|s_t,a_t)\in[0,\infty]$ through Eq.(\ref{equation:risk2}).
Hence we propose to 
rescale the output of classifier $F_\theta(s_t,a_t)$ to the range of $[0, 0.5]$.

We optimize the contrastive classifier's parameter $\theta$ using maximum likelihood estimation (MLE). 
The log-likelihood objective function can be written as:
\begin{equation}
\begin{aligned}
\label{equation:risk3}
	L(\theta)=\mathbb{E}_{p(s_t,a_t|y=1)p(y=1)}[\log F_\theta(s_t,a_t)]+\mathbb{E}_{p(s_t,a_t)}[\log(1-F_\theta(s_t,a_t))]
\end{aligned}
\end{equation}
During the training process, the positive state-action pair $(s_t,a_t)$ 
for the first term of this objective can be sampled from the observed unsafe examples, while the state-action pair $(s_t,a_t)$ for the second term can be sampled
from the overall distribution.

\subsection{Risk Preventive Trajectory}
\label{section:trajectory}

\begin{figure}
\centering
\setlength{\abovecaptionskip}{0.cm}
\includegraphics[width=0.5\textwidth]{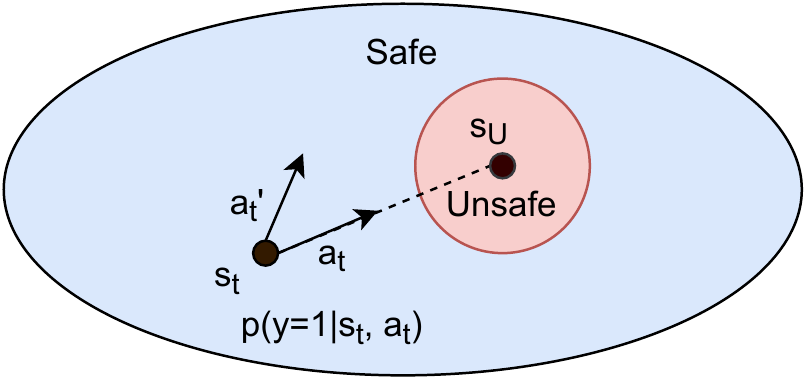}
\caption{An overview of the risk preventive process. The blue area denotes the safe region of the trajectory. The dot $s_U$ is an unsafe state. The red area around the unsafe state $s_U$ denotes the unsafe region, bounded by the value of risk prediction $p(y=1|s_t,a_t).$ The RL agent explores risk preventive trajectories by avoiding entering the unsafe region in the training process.}
\label{fig:figure}
\end{figure}

Based on Definition~\ref{definition:unsafe}, a trajectory terminates when the RL agent encounters an unsafe state 
and triggers safety constraint violation. 
It is however desirable to minimize the number of such safety violations even during the policy training process 
and learn a good policy in safe regions. 
The risk prediction classifier we proposed above provides a convenient tool for this purpose 
by predicting the probability of a state-action pair leading to unsafe states, $p(y=1|s_t, a_t)$.
Based on this risk prediction, we have the following definition for unsafe regions:
\begin{definition}
\label{definition:region}
A state-action pair $(s_t,a_t)$ falls into an \textbf{unsafe region} if the probability of $(s_t,a_t)$ 
leading to unsafe states is greater than a threshold $\eta$: $p(y=1|s_t,a_t)>\eta$, where $\eta\in(0, 1)$.
\end{definition}
With this definition, a RL agent can pursue risk preventive trajectories to avoid safety violations by staying away from unsafe regions.
Specifically, we can terminate a trajectory before violating the safety constraint
by judging the risk---the probability of $p(y=1|s_t,a_t)$. 
The process is illustrated in Figure~\ref{fig:figure}.

Without a doubt, the threshold $\eta$ is a key for determining the length $T=|\hat{\tau}|$ of an early stopped risk preventive trajectory $\hat{\tau}$. 
To approximate a derivable relation between $\eta$ and $T$, 
we make the following assumption and lemma:
\begin{assumption}
\label{assumption:length}
For a trajectory $\tau=\{s_0,a_0,r_0,c_0, s_1, \cdots, s_H\}$ that leads to an unsafe state $s_H\in S_U$, 
the risk prediction probability $p(y=1|s_t,a_t)$ increases linearly along the time steps. 
\end{assumption}

\begin{lemma}
\label{lemma:length}
Assume that Assumption~\ref{assumption:length} holds. 
Let $H\in \mathbb{N}$ denote the length of an unsafe trajectory  $\tau=\{s_0,a_0,r_0,c_0, s_1, a_1, r_1, c_1, \cdots, s_H\}$ 
that terminates at an unsafe state $s_H\in S_U$.
The number of transition steps, $T$, along this trajectory to the unsafe region determined by $\eta$ in Definition~\ref{definition:region}
can be approximated as: 
	$T=\lfloor\frac{\eta-p_0}{1-p_0}H\rfloor$, where $p_0 = p(y=1|s_0, a_0)$.
\end{lemma}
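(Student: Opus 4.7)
The plan is to parameterize $p_t := p(y=1\mid s_t, a_t)$ as a linear function of $t$ under Assumption~\ref{assumption:length}, pin down its two endpoints, then solve the threshold equation $p_T = \eta$ and round down to the nearest integer.

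For the endpoints: at $t=0$, the probability is $p_0$ by definition. At $t=H$, the trajectory has reached an unsafe state $s_H\in S_U$; by Definition~\ref{definition:unsafe}, reaching $s_H$ triggers the safety violation deterministically, so the event ``this trajectory leads to an unsafe state'' has already occurred with certainty, giving $p_H = 1$. Assumption~\ref{assumption:length} then forces the unique linear interpolation
$$p_t = p_0 + \frac{1-p_0}{H}\, t, \qquad t = 0, 1, \ldots, H.$$

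Next, by Definition~\ref{definition:region}, the pair $(s_t, a_t)$ enters the unsafe region as soon as $p_t > \eta$. The largest index $T$ at which $(s_T, a_T)$ is still outside the unsafe region therefore satisfies $p_T \leq \eta < p_{T+1}$. Solving $p_0 + \tfrac{1-p_0}{H}\,t = \eta$ over the reals gives $t = \tfrac{(\eta - p_0)\,H}{1 - p_0}$, and because $T$ must be a non-negative integer time-step, taking the floor yields $T = \lfloor \tfrac{\eta - p_0}{1 - p_0}\,H \rfloor$, exactly the claimed formula.

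The main obstacle is not the algebra but the endpoint justification: Assumption~\ref{assumption:length} only posits linear growth and leaves both anchor values implicit, so one has to argue cleanly that $p_H = 1$ using the semantics of Definition~\ref{definition:unsafe} (an unsafe state is terminal and the cost indicator fires with certainty). A secondary bookkeeping point is that $a_H$ need not be defined since the trajectory stops at $s_H$; this can be sidestepped either by stating the linear model on indices where $a_t$ exists and taking $p_{H-1}\to 1$ in the limit, or by the convention that the endpoint value attaches to $s_H$ itself. Once these semantic conventions are fixed, the remainder of the proof is the one-line solve-and-floor computation above.
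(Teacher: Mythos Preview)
Your proof is correct and follows essentially the same linear-interpolation argument as the paper: both fix the endpoints $p_0$ and $p_H=1$, use linearity to relate $T/H$ to $(\eta-p_0)/(1-p_0)$, and floor to obtain an integer step count. If anything, your version is more explicit than the paper's, which simply asserts the ratio $\frac{T}{H}\approx\frac{\eta-p_0}{1-p_0}$ without writing out the interpolation formula or justifying the terminal value $p_H=1$.
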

\begin{proof}
According to assumption~\ref{assumption:length}, the probability for $(s_t,a_t)$ leading to unsafe states $p(y_t=1|s_t,a_t)$ increases linearly along timesteps. 
Let $p_0$ denote the probability starting from the initial state $s_0$: $p_0=p(y_t=1|s_0,a_0)$. 
For a probability threshold $\eta\in (0,1)$ for unsafe region identification in Definition 2, 
the ratio between 
	the number of environment transition steps $T$ to the unsafe region
	and the unsafe trajectory length $H$ will be approximately (due to integer requirements over $T$)
	equal to the ratio between the probability differences of $\eta-p_0$ and $1-p_0$.
	That is, $\frac{T}{H}\approx \frac{\eta-p_0}{1-p_0}$.
Hence $T$ 
	can be approximated as:
$T=\lfloor\frac{\eta-p_0}{1-p_0}H\rfloor$.
\end{proof}
With Assumption~\ref{assumption:length}, the proof of Lemma~\ref{lemma:length} is straightforward. 
This Lemma clearly indicates that a larger $\eta$ value will allow more effective explorations with longer trajectories,
but also tighten the unsafe region and increase the possibility of violating safety constraints. 

\subsection{Risk Preventive Reward Shaping}
\label{section:reward}
With Definition~\ref{definition:unsafe}, 
the safe RL formulation in Eq.~(\ref{equation:cmdp}) can hardly induce a safe policy 
since there are no intermediate costs before encountering an unsafe state. 
With the risk prediction classifier proposed above, we can rectify this drawback
by defining the cumulative cost function $J_c(\pi)$ using the risk prediction probabilities, $p(y=1|s_t, a_t)$,
over all encountered state-action pairs. 
Specifically, we adopt a reward-like discounted cumulative cost as follows: 
$J_c(\pi)=\mathbb{E}_{\tau\sim \Dc_\pi}\left[\sum_{t=0}^{|\tau|} \gamma^t p(y=1|s_t,a_t)\right]$,
which uses the predicted risk at each time-step as the estimate cost. 
Moreover, instead of solving safe RL as a constrained discounted cumulative reward maximization problem,
we propose using Lagrangian relaxation~\citep{bertsekas1997nonlinear} 
to convert the constrained maximization CMDP problem in Eq.~(\ref{equation:cmdp}) 
to an unconstrained optimization problem,
which is equivalent to shaping the reward function with risk penalties: 
\begin{align}
	&\min_{\lambda\geq0}\max_\pi\quad \left[J_r(\pi)-\lambda (J_c(\pi)-d)\right]\\
	\Longleftrightarrow \quad &
	\min_{\lambda\geq0}\max_\pi\quad \left[J_r(\pi)-\lambda J_c(\pi)\right] \\
	\Longleftrightarrow \quad &
	\min_{\lambda\geq0}\max_\pi\quad \mathbb{E}_{\tau\sim \Dc_\pi}\left[\sum\nolimits_{t=0}^{|\tau|} \gamma^t (r_t-\lambda p(y=1|s_t,a_t))\right]
	\label{Jhatr}	
\end{align}
The Lagrangian dual variable $\lambda$ controls the degree of reward shaping with the predicted risk value. 
\begin{theorem}
\label{theorem:risk}
To prevent the RL agent from falling into known unsafe states, 
the penalty factor (i.e., the dual variable) $\lambda$ for the shaped reward
$\hat{r}_t=r_t-\lambda p(y_t=1|s_t,a_t)$ should have the following lower bound,
where $H$ and $\eta$ are same as in Lemma~\ref{lemma:length}:
\begin{equation}
\label{equation:theorem}
\lambda > \frac{(1-\gamma^{H})(r_{max}-r_{min})}
{\eta\gamma^{\lfloor\frac{\eta-p_0}{1-p_0}H\rfloor}(1-\gamma^{H-\lfloor\frac{\eta-p_0}{1-p_0}H\rfloor})}
\end{equation}
\end{theorem}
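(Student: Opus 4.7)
The plan is to derive the bound by comparing the discounted cumulative shaped return of an arbitrary unsafe trajectory (one whose last state lies in $S_U$) with that of a hypothetical safe trajectory that avoids the unsafe region entirely, and then forcing the safe trajectory to dominate. If we want the policy optimizing $\hat{r}_t = r_t - \lambda\, p(y_t=1\mid s_t,a_t)$ to shy away from known unsafe states, a necessary condition is that even a worst-case safe trajectory yields a larger discounted cumulative shaped reward than the best-case unsafe trajectory.

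First I would set up notation with $T=\lfloor\frac{\eta-p_0}{1-p_0}H\rfloor$ from Lemma~\ref{lemma:length}, which identifies the index at which a trajectory heading to $s_H\in S_U$ crosses into the unsafe region defined by threshold $\eta$. Along the unsafe trajectory, I would split the time axis at $T$: for $t<T$ the predicted risk is at most $\eta$ and the per-step shaped reward is upper bounded by $r_{max}$ (dropping the non-negative penalty term is already the most generous bound for the unsafe branch), while for $T\le t<H$ Definition~\ref{definition:region} gives $p(y=1\mid s_t,a_t)>\eta$, so the per-step shaped reward is upper bounded by $r_{max}-\lambda\eta$. Summing with discount $\gamma^t$ yields
\begin{equation}
J_{\hat r}^{\text{unsafe}} \;\le\; r_{max}\,\frac{1-\gamma^H}{1-\gamma} \;-\; \lambda\eta\,\gamma^T\,\frac{1-\gamma^{H-T}}{1-\gamma}.
\end{equation}
For a safe trajectory of the same horizon the per-step reward is at least $r_{min}$ and the predicted risk stays below $\eta$, so as a lower bound
\begin{equation}
J_{\hat r}^{\text{safe}} \;\ge\; r_{min}\,\frac{1-\gamma^H}{1-\gamma}.
\end{equation}

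Demanding $J_{\hat r}^{\text{safe}} > J_{\hat r}^{\text{unsafe}}$ with these bounds, the $1-\gamma$ denominators cancel, the $r_{max}$ and $r_{min}$ terms combine into $(r_{max}-r_{min})(1-\gamma^H)$ on one side, and isolating $\lambda$ produces exactly
\begin{equation}
\lambda \;>\; \frac{(1-\gamma^H)(r_{max}-r_{min})}{\eta\,\gamma^{T}\,(1-\gamma^{H-T})},
\end{equation}
which matches Eq.~(\ref{equation:theorem}) once $T$ is substituted. The main obstacle is justifying the one-sided worst-case/best-case pairing: strictly speaking one needs to argue that the \emph{maximizing} policy would switch to the safe branch as soon as the inequality $J_{\hat r}^{\text{safe}}>J_{\hat r}^{\text{unsafe}}$ holds under these extremal bounds, which implicitly treats $r_{max}$ and $r_{min}$ as tight for the relevant trajectories; I would explicitly state this as the sufficient condition the theorem actually certifies and note that dropping the pre-$T$ penalty gives a conservative (hence sound) lower bound on $\lambda$. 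The remaining steps are purely algebraic manipulation of geometric sums and substitution of $T$ from Lemma~\ref{lemma:length}.
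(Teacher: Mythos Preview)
Your proposal is correct and follows essentially the same route as the paper: split the unsafe trajectory at the crossing index $T$ from Lemma~\ref{lemma:length}, upper-bound the unsafe shaped return by replacing rewards with $r_{\max}$ and penalties with $0$ before $T$ and $\lambda\eta$ after, lower-bound the safe return by $r_{\min}$ with zero penalty, and solve the resulting geometric-sum inequality for $\lambda$. The only cosmetic difference is that the paper phrases the right-hand side as the \emph{unpenalized} safe return (which is exactly what your lower bound reduces to), so your caveat about the worst-case/best-case pairing is well placed and matches the paper's implicit assumption.
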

\begin{proof}
For a trajectory with length $H$ that leads to an unsafe state $s_H\in S_U$, 
the largest penalized return of the unsafe trajectory should be smaller than the 
lowest possible unpenalized return from any safe trajectory with the same length:
\begin{align}
\label{equation:reward1}
	\sup_\tau\; \left[\sum\nolimits_{t=0}^{H-1}\gamma^t (r-\lambda p_t)\right] < 
	\inf_\tau \;\left[ \sum\nolimits_{t=0}^{H-1}\gamma^t r\right]
\end{align}
where $p_t = p(y=1|s_t,a_t)$.
With this requirement, the RL agent can learn a policy 
to explore safe states and prevent the agent from entering unsafe states $S_U$ through unsafe regions. 
As the reward function $r$ is bounded within $[r_{min}, r_{max}]$, 
we can further simplify the inequality in Eq.~(\ref{equation:reward1})
by seeking the supremum of its left-hand side (LHS) and the infimum of its right-hand side (RHS). 

Based on Definition~\ref{definition:region} and Lemma~\ref{lemma:length}, we can split an unsafe trajectory $\tau$
with length $H$ into two sub-trajectories: 
a sub-trajectory within the safe region, $\tau_1=(s_0,a_0,r_0,c_0,\cdots, s_{T-1},a_{T-1}, r_{T-1}, c_{T-1})$,
where $p_t\leq \eta$
and a sub-trajectory 
$\tau_2=(s_{T},a_{T}, r_{T}, c_{T}, \cdots, s_H)$ 
after entering the unsafe region determined by $p_t > \eta$.
With Assumption~\ref{assumption:length}, we have $p_t > \eta$ for all state-action pairs in the sub-trajectory $\tau_2$. 
Then the LHS and RHS of Eq.~(\ref{equation:reward1}) can be upper bounded and lower bounded respectively as follows:
\begin{align}
\label{equation:LHSUP}
	LHS &\,\leq\,  
	\sum\nolimits_{t=0}^{H-1}\gamma^t r_{max} 
	-\left(\sum\nolimits_{t=0}^{T-1}\gamma^t \lambda \cdot 0 + \sum\nolimits_{t=T}^{H-1}\gamma^t \lambda \eta\right)
	\\
	RHS &\,\geq \, 
	\sum\nolimits_{t=0}^{H-1}\gamma^t r_{min} 
\end{align}
To ensure the satisfaction of the requirement in Eq.~(\ref{equation:reward1}), we then enforce the follows: 
\begin{align}
&\sum\nolimits_{t=0}^{H-1}\gamma^t r_{max} 
-\left(\sum\nolimits_{t=0}^{T-1}\gamma^t \lambda \cdot 0 + \sum\nolimits_{t=T}^{H-1}\gamma^t \lambda \eta\right)
< 
\sum\nolimits_{t=0}^{H-1}\gamma^t r_{min} 
\\
\Longleftrightarrow \quad &
\frac{(1-\gamma^{H})r_{max}}{1-\gamma} -\frac{\lambda\eta\cdot\gamma^{T}(1-\gamma^{H-T})}{1-\gamma} 
	< \frac{(1-\gamma^{H})r_{min}}{1-\gamma}
\\	
\Longleftrightarrow \quad &
\lambda > \frac{(1-\gamma^{H})(r_{max}-r_{min})}{\eta\gamma^{T}(1-\gamma^{H-T})}
\label{equation:reward2}
\end{align}
With Assumption~\ref{assumption:length} and Lemma~\ref{lemma:length}, 
we can estimate the length $T$ for the safe sub-trajectory as $T=\lfloor\frac{\eta-p_0}{1-p_0}H\rfloor$. 
With this estimation, the lower bound for $\lambda$ in Eq.~(\ref{equation:theorem}) can be derived. 
\end{proof}

\begin{algorithm}
\caption{Risk Preventive Training}
\label{algorithm:RPT}
\hspace*{\algorithmicindent}
\textbf{Input} 
Initial policy $\pi_\phi$,\; classifier $F_\theta$,\; trajectory set $D=\emptyset$, \;
	set of unsafe states $S_U$, \\ 
\hspace*{\algorithmicindent}\quad\qquad
	threshold $\eta$, \; penalty factor $\lambda$, \; set of unsafe trajectory length $\Hc=\emptyset$ \\
\hspace*{\algorithmicindent}
\textbf{Output} Trained policy $\pi_\phi$
\begin{algorithmic}[1]
\For{$k=1,2,...,K$}
\For{$t=1,2,...,\Tt$}
\State Sample transition $(s_t,a_t,r_t,c_t,s_{t+1})$ from the environment with policy $\pi_\phi$.
\If{$c_t>0$}
\State Add risky state-action $(s_t,a_t)$ into the unsafe state set $S_U$.
\State Add length $t$ to $\Hc$. 
Increase $\lambda$ if the lower bound increases with 
	$H=t$ and  Eq.~(\ref{equation:theorem}).
\State Stop trajectory and break.
\EndIf
\State Sample next action $a_{t+1}$ based on policy $\pi$ and next state $s_{t+1}$: $a_{t+1}=\pi_\phi(\cdot|s_{t+1})$.
	\State Calculate $p_t$ and $p_{t+1}$ using Eq.~(\ref{equation:risk2})
\State Penalize reward $r_t$ with $p_t$:\quad  $\hat{r_t}=r_t-\lambda p_t$
\State Add transition to the trajectory set: $D=D\cup (s_t,a_t,\hat{r}_t,s_{t+1})$
\If{$p_{t+1}> \eta$}
\State Stop trajectory and break.
\EndIf
\EndFor
\State Sample risky state-action pairs from $S_U$ 
\State Sample transitions from history: $(s_t,a_t,\hat{r}_t,s_{t+1})\sim D$
\State Update classifier $F_\theta$ by maximizing the likelihood $L(\theta)$ in Eq~(\ref{equation:risk3})
	\State Update policy $\pi_\phi$ with the shaped rewards $J_{\hat{r}}(\pi)$ in Eq~(\ref{Jhatr})
\EndFor
\end{algorithmic}
\end{algorithm}
\subsection{Risk Preventive Training Algorithm}
Our overall risk preventive RL training procedure is presented in Algorithm~\ref{algorithm:RPT},
which simultaneously trains the risk prediction classifier
and performs reinforcement learning with risk preventive trajectory exploration and risk preventive reward shaping.

\section{Experiment}
We conducted experiments 
on four robotic simulation environments based on the MuJoCo simulator~\citep{todorov2012mujoco}. 
In this section, we report the experimental setting and empirical results.

\begin{figure}[t]
\centering
\setlength{\abovecaptionskip}{0.cm}
\includegraphics[width=0.80\textwidth]{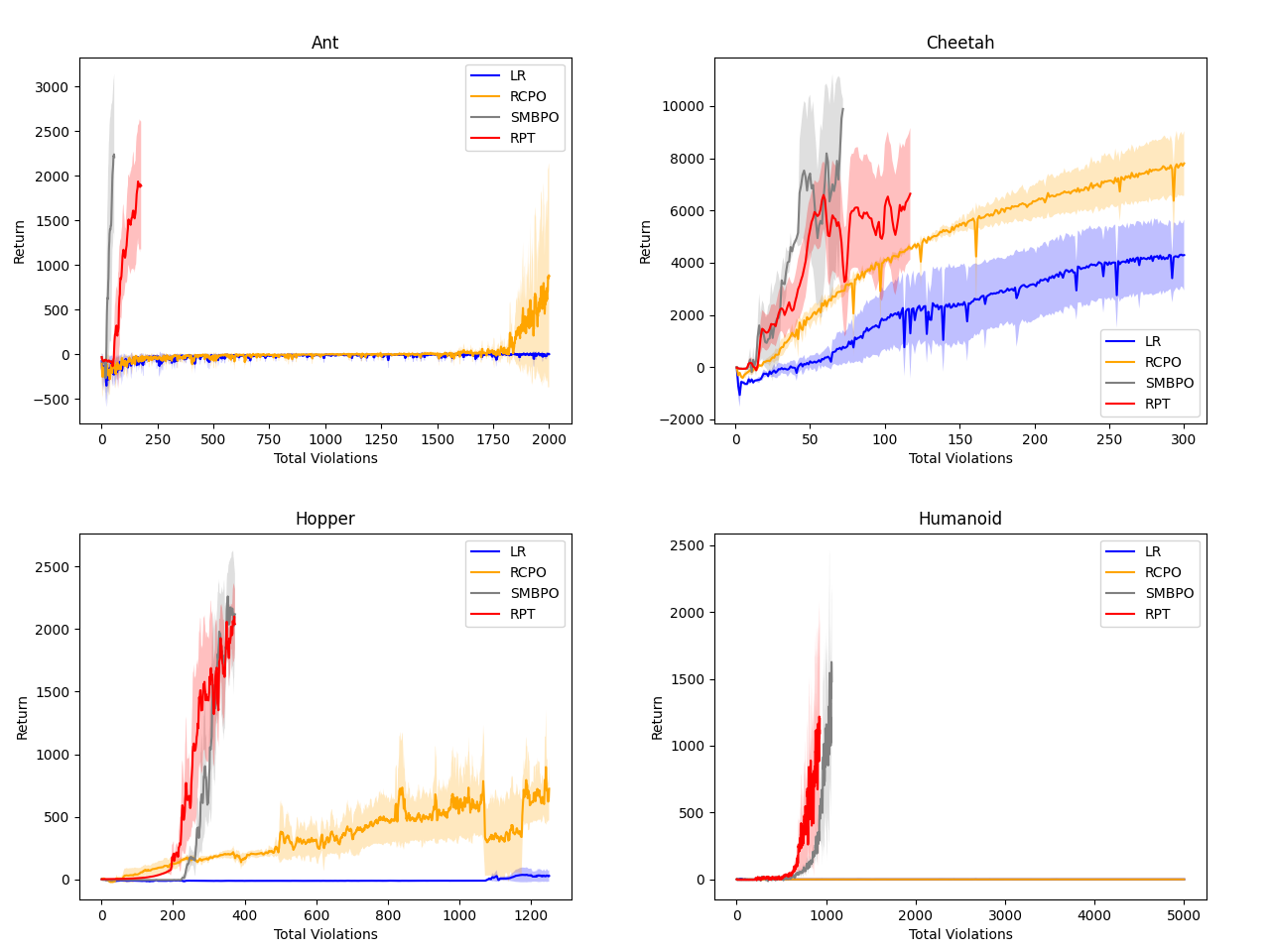}
\caption{For each method, each plot presents the undiscounted return vs. the total number of violations. 
	Results are collected for three runs with random seeds. 
	The curve shows the mean of the return over the three runs, while the shadow shows the standard deviation.}
\label{fig:plot}
\vskip -.1in
\end{figure}

\subsection{Experimental Settings}
\paragraph{Experimental Environments}
Following the experimental setting in~\citep{thomas2021safe}, 
we adopted four robotics simulation environments, {\em Ant, Cheetah, Hopper,} and {\em Humanoid}, 
using the MuJoCo simulator~\citep{todorov2012mujoco}. 
In MuJoCo environments, 
a violation is presented when the robot enters an unsafe state. 
For {\em Ant, Cheetah,} and {\em Hopper}, a robot violates the safety constraint when it falls over. 
For {\em Humanoid}, the human-like robot violates the safety constraint when the head of the robot falls to the ground. 
The RL agent reaches the end of the trajectory once it encounters the safety violation.

\paragraph{Comparison Methods}
We compare our proposed Risk Preventive Training (RPT) approach 
with three state-of-the-art safe RL methods: 
SMBPO~\citep{thomas2021safe}, RCPO~\citep{tessler2018reward}, and LR~\citep{ray2019benchmarking}.

\begin{itemize}
    \item \textbf{Safe Model-Based Policy Optimization (SMBPO)}: 
	   This is a model-based method that uses an ensemble of Gaussian dynamics based transition models.
	   Based on the transition models, it penalizes unsafe trajectories and avoid unsafe states under certain assumptions. 
    \item \textbf{Reward Constrained Policy Optimization (RCPO)}: 
	    This is a policy gradient method based on penalized reward under safety constraints. 
    \item \textbf{Lagrangian relaxation (LR)}: 
	    Its uses Lagrangian relaxation for the safety constrained RL. 
\end{itemize}

\paragraph{Implementation Details}
Implementations for LR and RCPO algorithms are adapted from the recovery RL paper~\citep{thananjeyan2021recovery}.  
For fair comparisons, following the original setting of LR and RCPO, we disable the recovery policy of the recovery RL framework, which collects offline data to pretrain the agent. 
For MBPO, we adopted the original implementation from the MBPO paper~\citep{thomas2021safe}.
Both SMBPO and RCPO are built on top of the Soft-Actor-Critic (SAC) RL method~\citep{haarnoja2018soft}. 
In the experiments, we also implemented the proposed RPT approach on top of SAC,
although RPT is a general safe RL methodology. 
We used $0.9$ as the threshold $\eta$ for risk preventive trajectory exploration. 
We collected the mean episode return and violation for $10^6$ time-steps.

\subsection{Experimental Results}

We compared all the four methods by running each method three times in each of the four MuJoCo environments. 
The performance of each method is evaluated by presenting
the corresponding return vs. the total number of violations obtained in the training process. 
The results for all the four methods (LR, RCPO, SMBPO, and RPT) are presented in Figure~\ref{fig:plot},
one plot for each robotic simulation environment. 
The curve for each method shows the learning ability of the RL agent with limited safety violations. 
From the plots, we can see both {\em RPT} and {\em SMBPO} 
achieve large returns with a small number of violations on all the four robotic tasks, 
and largely outperform
the other two methods, {\em RCPO} and {\em LR}, which have much smaller returns even 
with large numbers of safety violations. 
The proposed {\em RPT} produces slightly inferior performance than {\em SMBPO} on {\em Ant} and {\em Cheetah},
where our method requires more examples of unsafe states to yield good performance at the initial training stage. 
Nevertheless, {\em RPT} outperforms {\em SMBPO} on both {\em Hopper} and {\em Humanoid}.
As a model-free safe RL method, {\em RPT} produces an overall performance with 
the model-based method {\em SMBPO}.

\begin{figure}[t]
\vskip -.1in	
\centering
\setlength{\abovecaptionskip}{0.cm}
\includegraphics[width=0.6\textwidth]{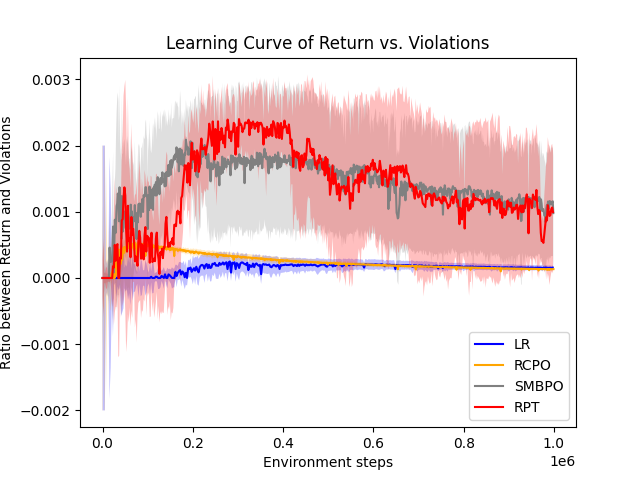}
\caption{The plot reports the ratio of the normalized return over the number of violations vs. time-steps.
	The curve for each method is calculated by averaging across all 4 environments (Ant, Cheetah, Hopper, Humanoid).}
\label{fig:plot_ratio}
\vskip -.2in	
\end{figure}

To compare the overall performance of the comparison methods across all the four environments, 
we need to take the average of the results on all four environments 
using an environment-independent measure. 
We propose to use the measure that divides the ratio between the return and the number of violations 
by the maximum trajectory return---i.e., the ratio between the trajectory normalized return
and the number of violations, and report the average performance of each method under this measure
across all four environments. 
The results are presented in Figure~\ref{fig:plot_ratio}.
We can see {\em RPT} produces comparable or even better performance on certain region of time-steps than {\em SMBPO},
and greatly outperforms {\em RCPO} and {\em LR}.
This again demonstrates that the proposed {\em RPT} produces the state-of-the-art performance for safe RL
on robotic environments.

\section{Conclusion}
Inspired by the increasing demands for safe exploration of Reinforcement Learning in Robotics, 
we proposed a novel mode-free risk preventive training method to perform safe RL 
by learning a statistical contrastive classifier to predict the probability of a state-action pair 
leading to unsafe states. 
Based on risk prediction, we can collect risk preventive trajectories that terminate early 
without triggering safety constraint violations. 
Moreover, the predicted risk probabilities are also used as penalties 
to perform reward shaping for learning safe RL policies,
with the goal of maximizing the expected return while minimizing the number of safety violations. 
We compared the proposed approach with a few state-of-the-art safe RL methods 
using four robotic simulation environments. 
The proposed approach demonstrates comparable performance with the state-of-the-art model-based method
and outperforms the model-free safe RL methods.


\bibliography{example}  

\end{document}